\title{Max-Margin Contrastive Learning}
\author{%
  Anshul Shah$^1$\thanks{Equal Contribution.} \qquad Suvrit Sra$^2$ \qquad Rama Chellappa$^1$ \qquad  Anoop Cherian$^{3*}$\thanks{Corresponding Author.}\\
}
\title{My Publication Title --- Single Author}
\author {
    Author Name
}
\author {
    % Authors
    First Author Name,\textsuperscript{\rm 1}
    Second Author Name, \textsuperscript{\rm 2}
    Third Author Name \textsuperscript{\rm 1}
}
\newcommand{\vx}{x}%{\bm{x}}
\newcommand{\px}{x^+}%{\bm{x}^+}
\newcommand{\nx}{x^{-}} %{\bm{x}^-}
\newcommand{\vz}{\bm{z}}
\newcommand{\pz}{\bm{z}^+}
\newcommand{\vy}{\bm{y}}
\newcommand{\vw}{\bm{w}}
\newcommand{\valpha}{\bm{\alpha}}
\newcommand{\vone}{\bm{1}}
\newcommand{\one}{\bm{1}}
\newcommand{\tone}{\bm{1}^{\!\top}}
\newcommand{\dataset}{\mathcal{D}}
\newcommand{\kxx}{k_{xx}}
\newcommand{\kxy}{\bm{k}_{xY}}
\newcommand{\Kyy}{\bm{K}_{YY}}
\newcommand{\mDel}{\bm{\Delta}}
\newcommand{\nY}{Y^{-}}%{\bm{Y}^-}
\newcommand{\mX}{{X}}
\newcommand{\mY}{{Y}}
\newcommand{\pZ}{{Z}^{+}}
\newcommand{\nZ}{{Z}^{-}}
\newcommand{\nmZ}{{\bm{Z}}^{-}}
\newcommand{\batch}{B}
\newcommand{\kernel}{\mathcal{K}}
\newcommand{\Kernel}{\bm{K}}
\newcommand{\trans}{\mathcal{T}}
\newcommand{\pX}{{X}^+}
\newcommand{\nX}{{X}^-}
\newcommand{\sX}{{X}}
\newcommand{\half}{\tfrac{1}{2}}
\newcommand{\ft}{\bm{f}_\theta}
\newcommand{\hinge}[1]{\left[#1\right]_+}
\newcommand{\enorm}[1]{\left\|#1\right\|}
\newcommand{\inv}[1]{{#1}^{\!-\!1}}
\newcommand{\set}[1]{\left\{#1\right\}}
\newcommand{\ip}[2]{\langle #1, #2 \rangle}
\newcommand{\reals}[1]{\mathbb{R}^{#1}}
\newcommand{\spd}[1]{S^{#1}_{++}}
\DeclareMathOperator{\dist}{sim}
\DeclareMathOperator*{\MMCL}{MMCL}
\DeclareMathOperator*{\INVMMCL}{MMCL_{INV}}
\DeclareMathOperator*{\PGDMMCL}{MMCL_{PGD}}
\DeclareMathOperator*{\argmin}{arg\,min}
\newtheorem{prop}{Proposition}%[section]
\begin{document}

\maketitle

\begin{abstract}
Standard contrastive learning approaches usually require a large number of negatives for effective unsupervised learning and often exhibit slow convergence. We suspect this behavior is due to the suboptimal selection of negatives used for offering contrast to the positives. We counter this difficulty by taking inspiration from support vector machines (SVMs) to present max-margin contrastive learning (MMCL). Our approach selects negatives as the sparse support vectors obtained via a quadratic optimization problem, and contrastiveness is enforced by maximizing the decision margin. As SVM optimization can be computationally demanding, especially in an end-to-end setting, we present simplifications that alleviate the computational burden. We validate our approach on standard vision benchmark datasets, demonstrating better performance in unsupervised representation learning over state-of-the-art, while having better empirical convergence properties. 
\end{abstract}

\section{Introduction}
Learning effective data representations is crucial to the success of any machine learning model. Recent years have seen a surge in algorithms for unsupervised representation learning that leverage the vast amounts of unlabeled data~\cite{chen2020simple,gidaris2018unsupervised,lee2017unsupervised,zhang2019aet,zhan2020online}.  In such algorithms, an auxiliary learning objective is typically designed to produce generalizable representations that capture some higher-order properties of the data. The assumption is that such properties could potentially be useful in (supervised) downstream tasks, which may have fewer annotated training samples. For example, in~\cite{noroozi2016unsupervised,santa2018visual}, the pre-text task is to solve patch jigsaw puzzles, so that the representations learned could potentially capture the natural semantic structure of images. Other popular auxiliary objectives include video frame prediction~\cite{oord2018representation}, image coloring~\cite{zhang2016colorful}, and deep clustering~\cite{caron2018deep}, to name a few.

Among the auxiliary objectives typically used for representation learning, one that has gained significant momentum recently is that of contrastive learning, which is a variant of the standard noise-constrastive estimation (NCE)~\cite{gutmann2010noise} procedure. In NCE, the goal is to learn data distributions by classifying the unlabeled data against random noise. However, recently developed contrastive learning methods learn representations by designing objectives that capture data invariances. Specifically, instead of using random noise as in NCE, these methods transform data samples to \emph{sets of samples}, each set consisting of transformed variants of a sample, and the auxiliary task is to classify one set (positives) against the rest (negatives). Surprisingly, even by using simple data transformations, such as color jittering, image cropping, or rotations, these methods are able to learn superior and generalizable representations, sometimes even outperforming supervised learning algorithms in downstream tasks (e.g., CMC~\cite{tian2019contrastive}, MoCo~\cite{chen2020improved,he2020momentum}, SimCLR~\cite{chen2020simple}, and BYOL~\cite{grill2020bootstrap}).

\begin{figure}
        \centering
        \includegraphics[width=\linewidth]{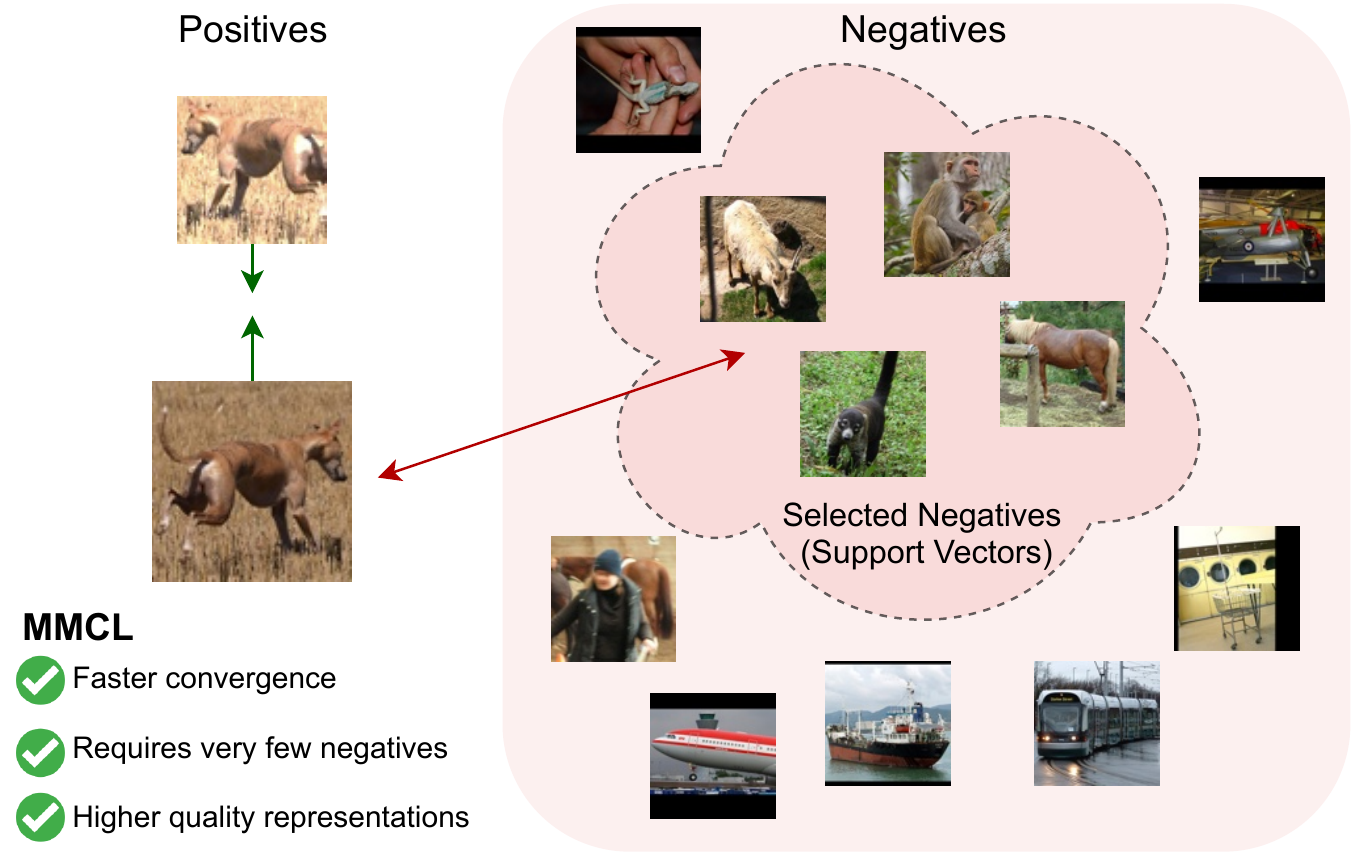}
        \caption{An illustration of our \emph{Max-Margin Contrastive Learning} framework. For every positive example, we compute a weighted subset of (hard) negatives via computing a discriminative hyperplane by solving an SVM objective. This hyperplane is then used in learning to maximize the similarity between the representations of the positives and minimize the similarity between the representations of the positives against the negatives. The negatives in the figure are actual ones selected by our scheme for the respective positive.}
        \vspace*{-0.5cm}
        \label{fig: teaser_fig}
    \end{figure}
    
Typically, contrastive learning methods use the NCE-loss
for the learning objective, which is usually a logistic
classifier separating the positives from the negatives. However, as is often found in NCE algorithms, the negatives should be \emph{close} in distribution to the positives for the learned representations to be useful -- a criteria that often demands a large number of negatives in practice (e.g., 16K in SimCLR~\cite{chen2020simple}). Further, standard contrastive learning approaches make the implicit assumption that the positives and negatives belong to distinct classes in the downstream task~\cite{arora2019theoretical}. This requirement is hard to enforce in an unsupervised training regime and defying this assumption may hurt the downstream performance due to beneficial discriminative cues being ignored.

In this paper, we explore alternative formulations for contrastive learning beyond the standard logistic classifier. Rather than contrasting the positive samples against all the negatives in a batch, our key insight is to design an objective that: (i) selects a suitable subset of  negatives to be contrasted against, and (ii) provides a means to  relax the effect of false negatives on the learned representations. Fig.~\ref{fig: teaser_fig} presents an overview of the idea. A natural objective in this regard is the classical support vector machine (SVM), which produces a discriminative hyperplane with the maximum margin separating the positives from the negatives. Inspired by SVMs, we propose a novel objective, \emph{max-margin contrastive learning} (MMCL), to learn data representations that maximizes the SVM decision margin. MMCL brings in several benefits to representation learning. For example, the \emph{kernel trick} allows for the use of rich non-linear embeddings that could capture desirable data similarities. Further, the decision margin is directly related to the support vectors, which form a weighted data subset. The ability to use slack variables within the SVM formulation allows for a natural control of the influence of false negatives on  the representation learning setup.

A straightforward use of the MMCL objective could be practically challenging. This is because SVMs involve solving a constrained quadratic optimization problem, solving which exactly could dramatically increase the training time when used within standard deep learning models. To this end, inspired by coordinate descent algorithms, we propose a novel reformulation of the SVM objective using the assumptions typically used in contrastive learning setups. Specifically, we propose to use a single positive data sample to train the SVM against the negatives -- a situation for which efficient approximate solutions can be obtained for the discriminative hyperplane. Once the hyperplane is obtained, we propose to use it for representation learning. Thus, we formulate an objective that uses this learned hyperplane to maximize the classification margin between the remaining positives and the negatives. To demonstrate the empirical benefits of our approach to unsupervised learning, we replace the logistic classifier from prior  contrastive learning algorithms with the proposed MMCL objectives. We present experiments on standard benchmark datasets; our results reveal that using our max-margin objective leads to {faster convergence} and needs {far fewer negatives} than prior approaches and produces representations that are {better generalizable} to several downstream tasks, including transfer learning for many-shot recognition, few-shot recognition, and surface normal estimation.

Below, we summarize the key contributions of this work:
\begin{itemize}
    \item We propose a novel contrastive learning formulation using SVMs, dubbed \emph{max-margin contrastive learning}.
    \item We present a novel simplification of the SVM objective using the problem setup commonly used in contrastive learning -- this simplification allows deriving efficient approximations for the decision hyperplane.
    \item We explore two approximate solvers for the SVM hyperplane: (i) using projected gradient descent and (ii) closed-form using truncated least squares.
    \item We present experiments on standard computer vision datasets such as ImageNet-1k, ImageNet-100, STL-10, CIFAR-100, and UCF101, demonstrating superior performances against state of the art, while requiring only smaller negative batches. Further, on a wide variety of transfer learning tasks, our pre-trained model shows better generalizability than competing approaches. 
\end{itemize}

\section{Related Works}
While the key ideas in contrastive learning are  classical \cite{becker1992self,gutmann2010noise,hadsell2006dimensionality}, it has  recently become very popular due to its applications in self-supervised learning. Arguably, objectives based on contrastive learning have outperformed several hand-designed pre-text tasks \cite{doersch2015unsupervised,gidaris2018unsupervised,larsson2016learning,noroozi2016unsupervised,zhang2016colorful}.
Apart from visual representation learning, the idea of contrastive learning is quickly proliferating into several other subdomains in machine learning, including video understanding~\cite{NEURIPS2020_3def184a}, graph representation learning \cite{you2020graph,sun2020infograph}, natural language processing \cite{logeswaran2018an}, and learning audio representations \cite{saeed2020contrastive}. 

 In contrastive predictive coding~\cite{oord2018representation}, which is one of the first works to apply contrastive learning for self-supervised learning, the noise-contrastive loss was re-targeted for representation learning via the pre-text task of future prediction in sequences. It is often empirically seen that the quality of the negatives to be contrasted against has a strong influence on the effectiveness of the representation learned. To this end, for visual representation learning tasks, SimCLR \cite{chen2020simple,chen2020big} proposed a framework that uses a bank of augmentations to generate positives and negatives. As the number of negatives play a crucial role in NCE, many approaches also make use of a memory bank \cite{chen2020improved,he2020momentum,misra2020self,zhuang2019local} to enable efficient bookkeeping of the large batches of negatives. Other contrastive learning objectives include: clustering \cite{caron2018deep,caron2020unsupervised,PCL},  predicting the representations of augmented views \cite{grill2020bootstrap}, and learning invariances~\cite{tian2019contrastive,Xiao2020WhatSN}. The lack of access to class labels in contrastive learning can lead to incorrect learning; e.g., due to false negatives. Recent works have attempted to tackle this issue via avoiding \emph{sampling bias} \cite{chuang2020debiased} and adjusting the contrastive loss for the impact of false negatives \cite{robinson2021contrastive,huynh2020boosting,kalantidis2020hard,Iscen2018MiningOM}. In comparison to these methods that make adjustments to the NCE loss, we propose an alternative way to view contrastive learning through the lens of max-margin methods using support vector machines; allowing for an amalgamation of the rich literature of SVMs with modern deep unsupervised representation learning approaches.

A key idea in our setup is to view the support vectors as hard negatives for contrastive learning via maximizing the decision margin. Conceptually, this idea is reminiscent of hard-negative mining used in classical supervised learning setups, such as deformable parts models~\cite{felzenszwalb2009object}, triplet-based losses~\cite{schroff2015facenet}, and stochastic negative mining approaches~\cite{reddi2019stochastic}. However, different from these methods, we explore self-supervised losses in this paper, which require novel reformulations of max-margin objectives for making the setup computationally tractable. Our proposed approximations to MMCL result in a one-point-against-all SVM classifier, which is similar to exemplar-SVMs~\cite{malisiewicz2011ensemble}; however rather than learning a bank of classifiers for specific tasks, our objective is to learn embeddings that are generalizable and useful for other tasks.

\section{Preliminaries}
In this section, we review our notation and visit the principles of contrastive learning, support vector machines, and their potential connections, that will set the stage for presenting our approach. We use lower-case for single entities (such as $x$), and upper-case (e.g., $\mX$) for matrices (synonymous with a collection of entities). We use lower-case bold-font (e.g., $\vz$) for vectors. For a function, say $f$, defined on vectors, we sometimes overload it as $f(X)$, by which we mean applying $f$ to each entity in $X$. 

\subsection{Contrastive Learning}
Suppose $\dataset=\set{\vx_i}_{i=1}^N$ is a given unlabeled dataset, where each $\vx_i\in\reals{d}$. Let $\trans\!:\reals{d}\to\reals{d}$ denote a random cascade of data transformation maps (e.g., random image crops and rotations). 
Standard contrastive learning methods use $\trans$ to augment $\dataset$, thereby producing sets of data points $\dataset'=\set{\sX_1,\sX_2,\cdots, \sX_N}$, where each $\sX$ is a (potentially infinite) set of transformed data samples obtained via randomly applying $\trans$ on each $\vx$, i.e., $\sX=\set{\trans(\vx)}$. The task of representation learning then amounts to minimizing an objective that maximizes the similarity between points from within a set against data points from other sets -- essentially learning the data manifold in some representation space, with the hope that such representations are useful in subsequent tasks. 

Suppose $\ft\!:\reals{d}\to\reals{d'}$ denote a function mapping a data point $\vx$ to its representation, i.e., $\ft(\vx)$. Then, inspired by noise-contrastive estimation~\cite{gutmann2010noise}, contrastive learning methods learn the function $\ft$ via minimizing the empirical logistic loss (with respect to $\theta$):
\begin{equation}
 -\!\!\sum\limits_{\sX\in\batch}\!\!\log\frac{g(\ft(\vx), \ft(\px))}{g(\ft(\vx),\ft(\px))+ \sum_{\nx\in\batch\backslash\sX}g(\ft(\vx),\ft(\nx))},
    \label{eq:1}
\end{equation}
%over all batches $\batch\subset\dataset'$
 over batches $\batch\subset\dataset'$ with positives $\set{\vx,\px}\subset\sX\in\batch$, negatives $\nx\in\sX'$, where $\sX'\in\batch\backslash\sX$, and using a suitable similarity function $g$ (e.g., a learnable projection-head followed by an exponentiated-cosine distance as in SimCLR~\cite{chen2020simple}). 
As alluded to earlier, the contrastive learning loss in~\eqref{eq:1} poses several challenges from a representation learning perspective. For example, in the absence of any form of supervision, this learning objective needs to derive the training signals from the (thus far learned) representations of the negative pairs, which could be very noisy; thereby requiring very large negative batches. However, having such large batches increases the chances of \emph{class collisions}, i.e., positives and negatives belonging to the same class in a subsequent downstream task; such collisions have been shown to be detrimental~\cite{arora2019theoretical}. As alluded to earlier, unlike approaches that attempt to circumvent this issue, such as~\cite{huynh2020boosting,robinson2021contrastive,chuang2020debiased}, we seek to explore alternative contrastive learning objectives that are less sensitive to issues discussed above using formulations that maximize the discriminative margin between the positives and the negatives. 

Note that instead of the InfoNCE loss, as in ~\eqref{eq:1}, for contrasting the positives from the negatives, an alternative is perhaps the hinge loss~\cite{arora2019theoretical,chen2020simple}, that minimizes (with respect to $\theta$):

\begin{equation}
     \sum_{\vx, \px, \nx} \hinge{\ t - \dist(\ft(\vx), \ft(\px)) + \dist(\ft(x), \ft(\nx))},\notag
    % \label{eq:hinge}
\end{equation}
where $\hinge{\ .\ }=\max(0,\ .)$ denotes the hinge loss and $t$ is a margin hyperparameter that must be tuned manually. Our proposed scheme avoids the need for this hyperparameter as the margin is an objective of the optimization.

\begin{figure}
    \centering
    \includegraphics[width=8.5cm,trim={0.5cm 0cm 0cm 0cm},clip]{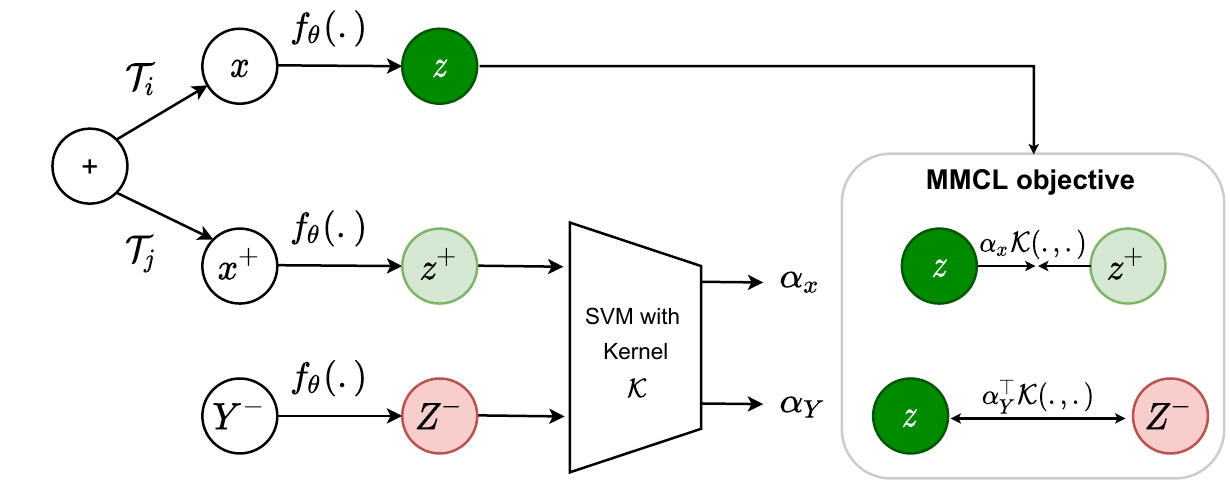}
    \caption{An illustration of our MMCL approach. Given a positive point ($+$) and a set of negatives $\mY^{-}$, MMCL learns the parameters $\theta$ of a backbone network $f_\theta$ via extracting features $z^+$ and $Z^-$ using a view $x^+$ of the positive $+$ and the negatives $Y^-$, respectively. These features are then used in an SVM with an RKHS kernel $\kernel$ to find a decision hyperplane parameterized by $\alpha_x$ and $\valpha_Y$. Next, MMCL uses the remaining positive views $z$ maximizing the similarity between $z$ and $z^+$, while minimizing the similarity between $z$ and $Z^-$, thereby achieving contrastiveness. This ensuing MMCL loss is then backpropagated through the pipeline, thereby learning $\theta$, which is the goal.}
    \label{fig:representative_figure}
\end{figure}

\subsection{Support Vector Machines}
Given two sets $\pX$ and $\nX$ with labels $y_\vx=1$, if $\vx\in\pX$ and $-1$ otherwise, the soft-margin SVM solves the objective:
\begin{align}
    &\min_{\vw, b, \xi\geq 0} \frac{1}{2}\enorm{\vw}^2 + C\sum_{\vx}\xi_\vx \notag\\
    \text{ s. t. }  y_\vx&(\vw^\top\vx + b) \geq 1-\xi_\vx, \forall\vx\in\pX \cup \nX,
    %\text{ and } &\vw^\top\vx \leq -1+\xi_\vx,  \forall\vx\in\nX,
    \label{eq:psvm}
\end{align}
where $\vw$ denotes the discriminative hyperplane separating the two classes, $b$ is a bias, and $\xi_\vx$ is a per-data-point non-negative slack with a penalty $C$ that balances between misclassification of \emph{hard} points and maximizing the decision margin. It is well-known that $1/\enorm{\vw}$ captures the margin between the positives and the negatives, and thus the objective in~\eqref{eq:psvm} attempts to find the hyperplane $\vw$ that maximizes this margin. The Lagrangian dual of~\eqref{eq:psvm} is given by:
\begin{align}
     \min_{0\leq \valpha\leq C, \valpha^\top\vy=0} \half\valpha^{\top}\Kernel(\pX,\nX)\valpha - \valpha^{\top}\vone,
    \label{eq:dsvm}
\end{align}
where $\Kernel\in\spd{|\pX\cup\nX|}$ denotes a symmetric positive definite kernel matrix, the $ij$-th element of which is given by: $\Kernel_{ij}= y_{\vx_i}y_{\vx_j}\kernel(\vx_i, \vx_j)$ for some suitable RKHS kernel $\kernel$ and $\vx_i,\vx_j\in\pX\cup\nX$. As the formulations in~\eqref{eq:psvm} and~\eqref{eq:dsvm} are convex, a solution $\valpha$ to~\eqref{eq:dsvm} provides the exact decision hyperplane for~\eqref{eq:psvm} and is given by:
\begin{equation}
    \vw(.) = \sum_{\vx\in\pX\cup\nX}\!\!\!\!\!\! \alpha_{\vx}y_{\vx}\kernel(\vx, .).
    \label{eq:hyperplane}
\end{equation}
As the bias term $b$ in~\eqref{eq:psvm} is not essential for the details to follow, we will not need the exact form of this term and will use $\vw(.)$ to refer to the decision hyperplane.

\section{Proposed Method}
In this section, we connect the approaches described above deriving our MMCL formulation. An overview of our approach is illustrated in Figure~\ref{fig:representative_figure}.

\subsection{Contrastive Learning Meets SVMs}
The advantages of SVM listed in the last section may seem worthwhile from a contrastive representation learning perspective, and suggest directly using SVM instead of the logistic classifier in~\eqref{eq:1}. Formally, using a soft-constraint variant of~\eqref{eq:psvm} with a margin $t$, the optimization problem in~\eqref{eq:1} can be re-written as:
\begin{align}
  \label{eq:2}
  &\min\limits_\theta\sum\limits_{B \subset D'}\sum\limits_{X \in B} \min\limits_{\vw_X}\Bigl(\tfrac12\enorm{\vw_X}^2 +
  \hinge{t - \ip{\vw_X}{\ft(X)}} + \notag\\
  &\qquad\qquad\qquad\sum\limits_{X^- \in B\backslash X} \hinge{t + \ip{\vw_X}{\ft(X^-)}}\Bigr),
\end{align}
where $X, X^-$ denote the sets of positives and negatives respectively, and $w_X$ captures a max-margin hyperplane separating them.\footnote{Note that $\ft(\Lambda)$ we mean applying $\ft$ to each item in set $\Lambda$.} The inner optimization over \emph{each} $w_X$ is what translates into training an SVM. We augment this inner optimization problem in two ways: (i) by including slack variables to model a soft-margin (as in~\eqref{eq:psvm}), which results in a hyperparameter $C$; and (ii) by permitting an additional nonlinear feature map $\phi$ so that we may use $\phi(\ft)$ (as in~\eqref{eq:dsvm}) in~\eqref{eq:2}. Using these changes, a contrastive learning formulation via maximizing the SVM classification margin may be derived (by rewriting~\eqref{eq:2}) as:

\begin{align}
    &\min_{\theta}\ {\cal L}(\theta) := \sum_{\batch\subset\dataset'}\sum_{\sX\in\batch}  \valpha_X{^*}^{\top}\!\Kernel\left(\ft(\sX), \ft(\batch\backslash\sX)\right)\valpha_X^*,\label{eq:5}\\
        &\text{s.t.}\quad\label{eq:6}\valpha_X^* = \argmin\limits_{\substack{0\leq \valpha\leq C,\\\valpha^\top\vy=0}}\ \tfrac12\valpha^{\top}\Kernel(\ft(\sX), \ft(\batch\backslash\sX))\valpha-\valpha^{\top}\vone,
\end{align}
where $\Kernel(\pZ, \nZ)=\left[\begin{array}{cc}\kernel(\pZ,\pZ),  -\kernel(\pZ, \nZ)\\ -\kernel(\nZ,\pZ), \kernel(\nZ, \nZ)\end{array}\right]$
is a kernel matrix induced by the RKHS kernel $\kernel(\vz,\vz')=\ip{\phi(\vz)}{\phi(\vz')}$. While SVMs have been widely studied in the machine learning literature~\cite{smola1998learning,cortes1995support}, our idea of linking the fields of SVMs and Contrastive Learning has not been explored before.

In~\eqref{eq:6}, we use the so-far trained $\ft$ to produce $\valpha_X^*$ defining the decision margin, which is then used in~\eqref{eq:5} to update $\theta$ while striving to maximize the margin; doing so, pushes the support vectors from the positive and negative classes away from each other. Unfortunately, despite its intuitive simplicity, the formulation \eqref{eq:5}-\eqref{eq:6} is impractical to use directly. Indeed, it is a challenging bilevel optimization problem~\cite{gould2016differentiating,amos2017optnet,wang2018video}, and if we use an iterative SVM solver for the lower problem~\eqref{eq:6} within a deep learning framework, it can incur significant slowdown. 

\textbf{Remarks.} There are several interesting aspects of the SVM solution that are perhaps beneficial from a contrastive learning perspective: (i) the dual solution $\valpha$ is usually sparse\footnote{When the $\kernel$ is chosen appropriately.}, and its active dimensions can be used to identify data points that are the support vectors defining the decision margin, (ii) the slack regularization controls the misclassification rate, and allows tuning the performance against the class collisions, similar to~\cite{chuang2020debiased}, (iii) the dimensions of $\valpha_{X}$ are equal to $C$ for misclassified points, which are perhaps hard or false negatives, and thus our formulation allows for identifying these points and mitigate their effects, and (iv) the use of the kernel function provides rich RKHS similarities at our disposal allowing to use, for example, novel structures within the learned representations (e.g., trees, graphs, etc.). 

\subsection{Max-Margin Contrastive Learning}
\label{sec:mmcl}
The primary method for solving~\eqref{eq:5} is stochastic gradient descent (SGD), which computes stochastic gradients over the batches $B \subset \dataset'$ via backpropagation while iteratively updating $\theta$. However, as has been previously observed for bilevel optimization~\citep{amos2017optnet,gould2016differentiating}, even obtaining a single stochastic gradient requires solving the lower problem~\eqref{eq:6} exactly, which is impractical. Our key idea to overcome this challenge is to introduce a ``\emph{sample splitting}'' trick inspired by coordinate descent, which helps to reduce the computational burden. Subsequently, we make additional approximations that lead to our final training procedure.

Without loss of generality, assume that $X$ consists of the pair $(\vx,\px)$; the same idea applies if we permit multiple such positive pairs in $X$. Instead of solving~\eqref{eq:6} using all the ``coordinates'', we \emph{split} the pair $(\vx,\px)$ into two parts: (i) $\px$, which is used to perform coordinate descent on~\eqref{eq:6}; and (ii) $\vx$, which is used to perform the SGD step for~\eqref{eq:5}. This splitting aligns well with contrastive learning, where often one uses only a pair of positives that must be contrasted against the negatives. 

The following proposition states how we perform part (i) of our split to estimate $\valpha_X^*$, which we will henceforth denote as $\valpha_{\vx}$ to indicate its dependence on the split sample. 

\begin{prop}
  \label{prop:npt}
  Let $(\px,\nY)$ be a tuple consisting of a positive point $\px\in\reals{d}$ and a set of $n$ negative points $\nY\in\reals{d\times n}$. Further, let $\pz=\ft(\px)$ and $\nmZ=\ft(\nY)$. Suppose $\kxx,\kxy,$ and $\Kyy$ denote $\kernel(\pz,\pz)$, $\kernel(\pz, \nmZ)$, and $\kernel(\nmZ, \nmZ)$, respectively. Consider the SVM decision function for a new point $\vz$ given by 
  \begin{equation}
    \label{eq:7}
    \vw(\vz) = \valpha_x^{\top}\left(\kernel(\pz, \vz)\one - \kernel(\nmZ, \vz)\right).
  \end{equation}
  Let $\mDel=\one\tone + \Kyy - \kxy\tone - \one\kxy^{\top}$, and let $P_{[0,C]}$ denote projection onto the interval $[0,C]$. By suitably selecting $\valpha_x$ in~\eqref{eq:7} we then obtain the following approximate max-margin solutions:
  \begin{enumerate}[(i)]
    \setlength{\itemsep}{0pt}
  \item (block) coordinate minimization  $\valpha_{\vx}^{\text{cm}}=\argmin_{0\le \valpha \le C} g(\valpha) := \half\valpha^{\top}\mDel\valpha -2\valpha^{\top}\one$,
  \item m-step projected gradient ($\PGDMMCL$): $\valpha_{\vx}^{\text{pg}} := \valpha_{m}= P_{[0,C]}(\valpha_{m-1}-\eta(\mDel\valpha_{m-1}-2\one))$, for some initial guess $\valpha_0\in [0,C]^n$, $\eta>0$ a step-size, and 
  \item greedy truncated least-squares ($\INVMMCL$): $\valpha_{\vx}^{\text{ls}}=P_{[0,C]}(2\inv{\mDel}\one)$.
  \end{enumerate}
  The various solutions satisfy $g(\mDel^{-1}1) \le g(\valpha_{\vx}^{\text{cm}}) \le \min\{g(\valpha_{\vx}^{\text{pg}}, g(\valpha_{\vx}^{\text{ls}})\}$. Moreover, $g(\valpha_{\vx}^{\text{pg}})-g(\valpha_{\vx}^{\text{cm}})=O\left(\exp\left(-m\frac{\lambda_{\min(\mDel)}}{\lambda_{\max(\mDel)}}\right)(g(\valpha_0)-g(\valpha_{\vx}^{\text{cm}}))\right)$.
\end{prop}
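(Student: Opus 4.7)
The plan is to reduce the inner SVM~\eqref{eq:6}, specialized to one positive $\px$ and $n$ negatives $\nY$, to a pure box-constrained quadratic in the negatives' dual variables, and then to read off three minimizers in increasing order of looseness. Let $\alpha_+$ denote the positive's dual weight and $\valpha\in\reals{n}$ the negatives'. The equality constraint $\valpha^{\top}\vy=0$ forces $\alpha_+=\tone\valpha$; substituting this into $\half\valpha^{\top}\Kernel\valpha-\valpha^{\top}\one$, using the normalization $\kxx=1$ customary for contrastive kernels, and expanding the sign-block structure of $\Kernel(\pZ,\nZ)$, the dual collapses exactly to $g(\valpha)=\half\valpha^{\top}\mDel\valpha-2\valpha^{\top}\one$ over $[0,C]^n$. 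Observe that $\mDel_{ij}=\ip{\phi(\nz_i)-\phi(\pz)}{\phi(\nz_j)-\phi(\pz)}$, so $\mDel$ is positive semidefinite (positive definite whenever the translated negative embeddings are linearly independent), which makes $g$ strongly convex. The same substitution applied to~\eqref{eq:hyperplane} produces the decision rule~\eqref{eq:7}.

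With this reduction in hand, the three claims follow quickly. For (i), $\valpha_{\vx}^{\text{cm}}$ is by definition the exact constrained minimizer of $g$ on the box, attainable by (block) coordinate descent since each coordinate subproblem is a scalar quadratic clipped to $[0,C]$ with a closed-form update; as a consequence $g(\valpha_{\vx}^{\text{cm}})\le g(\valpha)$ for every feasible $\valpha$, which immediately gives the right-hand inequality $g(\valpha_{\vx}^{\text{cm}})\le\min\{g(\valpha_{\vx}^{\text{pg}}),g(\valpha_{\vx}^{\text{ls}})\}$. For (iii), the unconstrained critical point of $g$ is $2\inv{\mDel}\one$, and $\valpha_{\vx}^{\text{ls}}=P_{[0,C]}(2\inv{\mDel}\one)$ is its greedy componentwise projection onto the feasible box, i.e., a truncated least-squares approximation. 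The left-hand bound $g(\inv{\mDel}\one)\le g(\valpha_{\vx}^{\text{cm}})$ I would verify by direct substitution of $\inv{\mDel}\one$ into $g$ and comparing with the unconstrained lower bound for the constrained minimum.

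For (ii), I would invoke the classical contraction analysis of projected gradient on $L$-smooth, $\mu$-strongly convex quadratics. With $L=\lambda_{\max}(\mDel)$, $\mu=\lambda_{\min}(\mDel)$, and step size $\eta=1/L$, the operator $\valpha\mapsto P_{[0,C]}(\valpha-\eta(\mDel\valpha-2\one))$ contracts iterates toward $\valpha_{\vx}^{\text{cm}}$ by a factor $1-\mu/L$ per step, since $P_{[0,C]}$ is firmly nonexpansive and the Euclidean gradient step is a contraction on the strongly convex quadratic. After $m$ iterations this produces $\enorm{\valpha_m-\valpha_{\vx}^{\text{cm}}}^2\le(1-\mu/L)^m\enorm{\valpha_0-\valpha_{\vx}^{\text{cm}}}^2$; converting this distance bound to a function-value bound via $L$-smoothness above and $\mu$-strong convexity below, and using $1-t\le e^{-t}$, yields the advertised $O(\exp(-m\lambda_{\min}(\mDel)/\lambda_{\max}(\mDel))(g(\valpha_0)-g(\valpha_{\vx}^{\text{cm}})))$ rate.

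The main obstacle is the bookkeeping in the first paragraph: the interplay of $\valpha^{\top}\vy=0$, the normalization $\kxx=1$, and the sign-block structure of $\Kernel(\pZ,\nZ)$ must be tracked carefully to land on precisely the stated $\mDel$, and one must also justify suppressing the induced box constraint $\tone\valpha\le C$ on the positive's dual weight -- this suppression is what earns all three methods the adjective \emph{approximate}. Everything else -- closed-form coordinate updates, projected-gradient convergence, and the inequality chain -- is then a routine consequence of strong convexity of $g$ and nonexpansiveness of the box projection.
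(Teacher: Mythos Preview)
Your proposal is correct and follows essentially the same approach as the paper: substitute the equality constraint $\alpha_+=\tone\valpha$ into the dual to obtain $g$, then derive the three approximations and the inequality chain from convexity plus the standard linear-rate bound for projected gradient descent. If anything, you are more careful than the paper's own proof---you explicitly flag the normalization $\kxx=1$ needed for the stated $\mDel$, observe its Gram-matrix structure $\mDel_{ij}=\ip{\phi(\nz_i)-\phi(\pz)}{\phi(\nz_j)-\phi(\pz)}$ (hence PSD), and call out the dropped simplex constraint $\tone\valpha\le C$ inherited from $\alpha_+\le C$, all of which the paper elides.
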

\begin{proof}
Choice (i) is obvious. To obtain (ii) and (iii), consider the following dual SVM formulation:
\begin{equation*}
\min_{0\leq\valpha_{\mY}\leq C} \frac12 \left[\!\!\begin{array}{c}\alpha_x\\\valpha_{\mY}\end{array}\!\!\right]^{\top} \left[\!\!\begin{array}{cc}\kxx &-\kxy^{\top}\\-\kxy&\Kyy\end{array}\!\!\right]\left[\!\!\begin{array}{c}\alpha_x\\\valpha_{\mY}\end{array}\!\!\right] - \left[\alpha_x + \valpha_{\mY}^{\top}\one\right],
\end{equation*}

\noindent where $\alpha_x=\valpha_{\mY}^\top\one$. Substituting for $\alpha_x$, we obtain\footnote{Note that $\alpha_x$ is the scalar Lagrangian dual associated with the data point $\vz$ while $\valpha_{x}$ is the vector of all dual variables associated with the batch $\batch$ when considering $x$ as the positive.}:
\begin{align*}
\min_{0\leq\valpha_{\mY}\leq C} g(\valpha_{\mY}) =  \half\valpha_{\mY}^{\top}\mDel\valpha_{\mY} - 2\valpha_{\mY}^{\top}\one.
\end{align*}
Setting $\nabla g(\valpha_{\mY})=0$, we obtain the unconstrained least-squares solution $2\mDel^{-1}\one$, which we can greedily truncate to lie in the interval $[0,C]$ to obtain (iii). Solution (ii) runs $m$ iterations of projected gradient descent, and hence it also satisfies a linear convergence rate, which rapidly brings it within the optimal solution $\valpha_{\vx}^{\text{cm}}$ at the well-known rate depending on the condition number $\lambda_{\max(\mDel)}/\lambda_{\min(\mDel)}$.
\end{proof}
Using Prop.~\ref{prop:npt}, we can reformulate the contrastive learning objective in ~\eqref{eq:5} as maximizing the margin in classifying the other part of the split, namely the positive point $\vx$, correctly against the negatives. Here, we introduce an additional simplification by rewriting the margin in terms of the separation between $\vx$ and $\nY$, using the decision hyperplane~\eqref{eq:7}. Let $\valpha_x$ denote the solution obtained from Proposition~\ref{prop:npt} using the positive point $\vx$. Then, we rewrite~\eqref{eq:5} into our proposed \emph{max-margin contrastive learning} objective as:
\begin{equation} 
\min_{\theta}\hspace{-7pt}\sum_{\substack{(\vx,\px)\sim\batch\in\dataset'\\\nY=\batch\backslash(\vx,\px)}}\hspace{-18pt}\valpha_{x}^T \big[\kernel\left(\ft(\nY\!),\ft(\vx)\right) - 
    \one\kernel\left(\ft(\px), \ft(\vx)\right)\big].
    \label{eq:mmcl}
\end{equation}

When optimizing for $\theta$, \eqref{eq:mmcl} seeks a representation map $\ft$ that improves the similarity between the positives $(\vx,\px)$ and the dissimilarity between $\vx$ and all the points in $\nY$, achieving a similar effect as in standard contrastive learning objective in~\eqref{eq:1}, but with the advantage of choosing kernels, selecting the support vectors that matter to the decision margin, as well as finding points that are perhaps hard negatives (those at the upper-bound of the box-constraints), all in one formulation. Note that, using the exact solver (i) in Prop.~\ref{prop:npt} turned out to be prohibitively expensive in standard contrastive learning pipelines and thus we do not use that variant in our experiments. In Algorithm~\ref{alg:mmcl}, we provide a pseudocode highlighting the key steps in our approach.

\begin{algorithm}[!t]
\caption{\label{alg:mmcl} Pseudocode for MMCL}
\begin{algorithmic}
    \STATE \textbf{Input:} Dataset $\dataset$, batch size $N$, encoder $\ft$, slack- \\penalty $C$, kernel $\mathcal{K}$, augmentation map $\mathcal{T}$
    \FOR{minibatch $B = \{\bm x_k\}_{k=1}^N\sim\dataset$}
    \STATE $\texttt{loss} := 0$
    \STATE \textbf{for} $k = 1, \ldots, N$ \textbf{do}
        \STATE $~~~~$draw $t_1 \!\sim\! \mathcal{T}$, $t_2 \!\sim\! \mathcal{T}$
        \STATE $~~~~$\textcolor{gray}{\# get embeddings for positives and negatives}
        \STATE $~~~~$$\pz = \ft(t_1(\bm x_k)), \vz = \ft(t_2(\bm x_k))$
        \STATE $~~~~$$\nmZ = \ft(t_1(B \setminus \bm x_k)\cup t_2(B \setminus \bm x_k))$
        \STATE $~~~~$\textcolor{gray}{\# calculate kernel similarities}
        \STATE $~~~~$$\kxy = \kernel(\pz, \nmZ)$, $\Kyy = \kernel(\nmZ, \nmZ)$
        \STATE $~~~~$\textcolor{gray}{\# Solve SVM}
        \STATE $~~~~$$\mDel=\one\tone + \Kyy - \kxy\tone - \one\kxy^{\top}$
        \STATE $~~~~$$\valpha_{\vx} = $ \texttt{svm\_solver}($\mDel$, $C$)~~\textcolor{gray} {\# using PGD or INV}
        \STATE $~~~~$\textcolor{gray}{\# calculate the loss}
        \STATE $~~~~$\texttt{loss} += $ \valpha_x^{\top}\left(\kernel(\nmZ, \vz) - \kernel(\pz, \vz)\one\right)$
        
    \STATE \textbf{end for}
    \STATE update the model to minimize the loss
    \ENDFOR
\end{algorithmic}
\end{algorithm}

\section{Experiments and Results}
\begin{table*}[h]
\resizebox{\linewidth}{!}{
\begin{tabular}{l|cccccccc|cc|c}
\toprule
& \multicolumn{8}{c}{Many-Shot classification} & \multicolumn{2}{c}{Few-Shot classification} & Surface Normal Est. \\
 \textit{Method} & Aircraft & Caltech101 & Cars & CIFAR10 & CIFAR100 & DTD & Flowers & Food &  CropDiseases & EuroSat & NYUv2 (Angular error) \\
 \midrule
 Supervised  & 83.5 & 91.01 & 82.61 & 96.39 & 82.91 & 73.30 & 95.50 & 84.60 & 93.09 $\pm$ 0.43 & 88.36 $\pm$ 0.44 & 27.91 \\
 \midrule
InsDis~\cite{wu2018unsupervised} & 73.38 & 72.04 & 61.56 & 93.32 & 68.26 & 63.99 & 89.51 & 76.78 & 91.95 $\pm$ 0.44 & 86.52 $\pm$ 0.51 & 27.35 \\
MoCo~\cite{he2020momentum} & 75.61 & 74.95 & 65.02 & 93.89 & 71.52 & 65.37 & 89.45 & 77.28 & 92.04 $\pm$ 0.43  & 86.55 $\pm$ 0.51 & 28.63 \\
PCL-v1~\cite{PCL} & 74.97 & \textbf{87.62} & 73.24 & \textbf{96.35} & 79.62 & 70 & 90.83 & 78.3 & 80.74 $\pm$ 0.57 & 75.19 $\pm$ 0.67 & 33.58\\
PCL-v2~\cite{PCL} & 79.37 & 88.04 & 71.68 & \textbf{96.5} & 80.26 & 71.76 & 92.95 & 80.34 &92.58 $\pm$ 0.44 & 87.94 $\pm$ 0.4 & 28.67 \\
MoCo-v2~\cite{chen2020improved}$^\dagger$  & 82.46 & 82.31 & 85.1 & 96.06 & 72.99 & 69.41 & \textbf{95.62} & 77.19 & 90.01 $\pm$ 0.48 & 88.06 $\pm$ 0.4 & \textbf{24.49} \\
MoCHI~\cite{kalantidis2020hard}$^\dagger$ & 83.03 & 84.45 & 85.49 & 95.68 & 77.07 & 70.85 & 94.8 & 78.9 & 91.93 $\pm$ 0.46 & \textbf{88.26} $\pm$ \textbf{0.4} & 31.75 \\
\midrule
Ours & \textbf{85.38} & \textbf{87.82} & \textbf{89.23} & \textbf{96.24} & \textbf{82.09} & \textbf{73.51} & \textbf{95.24} & \textbf{82.39} & \textbf{93.1} $\pm$ \textbf{0.45} & \textbf{88.75} $\pm$ \textbf{0.4} & \textbf{24.69}\\
\bottomrule
\end{tabular}}
\caption{Transfer learning results. We transfer an ImageNet-pre-trained model (using MMCL) on a range of downstream tasks and datasets. We compare with models pre-trained using a similar batch size and epochs. Results on competing approaches are taken from~\cite{Ericsson2021HowTransfer}. $^\dagger$Models evaluated using publicly available checkpoints.}
\label{tab: transfer_multishot}
\end{table*}

In this section, we systematically study the various components in MMCL, as well as compare performances of MMCL-learned representations for their quality via linear evaluation, and their generalizability on transfer learning tasks.

\noindent \textbf{Visual Representation Learning Experiments.} We base our experimental setup on the popular SimCLR~\cite{chen2020simple} baseline, which is widely used, especially to evaluate the effectiveness of the ``learning loss'' against other factors in a self-supervised algorithm (e.g., data augmentations, use of queues, multiple crops).
We use a ResNet50 backbone, followed by a two-layer MLP as the projection-head, followed by unit normalization. We pretrain our models on ImageNet-1K~\cite{deng2009imagenet} using the LARS optimizer~\cite{you2017large} with an initial learning rate of 1.2 for 100 epochs. We also present results on ImageNet-100~\cite{tian2019contrastive} (a subset of ImageNet-1K) and on smaller datasets such as STL-10~\cite{coates2011analysis} and CIFAR-100~\cite{krizhevsky2009learning}, especially for our ablation studies. We pre-train on ImageNet-100 for 200 epochs in our studies, while pre-training for 400 epochs on the smaller datasets. We use the Adam optimizer with a learning rate of 1e-3 as in~\cite{chuang2020debiased,robinson2021contrastive}. Unless otherwise stated, we use a batch-size of 256 for all ImageNet-1K, CIFAR-100, and STL-10 experiments and 128 for ImageNet-100 experiments. In addition, we also present results on video representation learning using an S3D backbone~\cite{Xie2018RethinkingSF} on the UCF-101~\cite{soomro2012ucf101} dataset, pre-trained using MMCL for 300 epochs. 
\\
\noindent\textbf{Hyperparameters:} 
We mainly use the RBF kernel for the SVM. For CIFAR-100 experiments, we start with a kernel bandwidth $\sigma^2=0.02$ and increase it by a factor of 10 at 75 and 125 epochs. For STL-10 experiments, we use a kernel bandwidth $\sigma^2=1$. We used $\sigma^2=5$ for ImageNet experiments. 
We set the SVM slack regularization $C$ to 100.  For the projected gradient descent optimizer for MMCL, we use a maximum of 1000 steps. Additional details are provided in the Appendix.

\noindent\textbf{Practical Considerations:}
Here, we note a few important but subtle technicalities that need to be addressed when implementing MMCL. Specifically, we found that backpropagating the gradients through $\valpha_Y$ is prohibitively expensive when using PGD iterations. On the other hand, for the least-squares variant, gradients through $\valpha_Y$ was found to be detrimental. This is perhaps unsurprising, because note that, $\valpha_Y$ term includes the term $\inv{\Delta}$. To improve the decision margin, one needs to make $\Delta$ an identity matrix, so that the off-diagonal elements go to zero during optimization, which suggests that the training gradients should reduce the magnitude of these terms. However, on the other hand, as $\valpha_Y$ uses $\inv{\Delta}$ one could also maximize the margin by making $\Delta$ ill-conditioned, via making the off-diagonal elements going to one. Such a tug-of-war between the gradients can essentially destabilize the training. Thus, we found that avoiding any backpropagation through $\valpha$ is essential for MMCL to learn to produce representations. We also found that using a small regularization $\Delta+\beta I$ ($\beta=0.1$) is necessary for the learning to begin. This is because, initially the representations can be nearly zero, and thus the kernel may be poorly conditioned.

\subsection{Experiments on Transfer Learning}
Recently, models pretrained using various self-supervised learning approaches have shown impressive performance when transferred to various downstream tasks. In this section, we evaluate MMCL-ImageNet pretrained models on such downstream tasks. For these experiments, we follow the experimental protocol provided in \cite{Ericsson2021HowTransfer}. We evaluate the models in the fine-tuning setting and use the benchmarking scripts provided in~\cite{Ericsson2021HowTransfer} without any modifications. 

First, we transfer the MMCL-pretrained backbone model to a collection of many-shot classification datasets used in~\cite{Ericsson2021HowTransfer}, namely FGVC Aircraft, Caltech-101, Stanford Cars, CIFAR-10, CIFAR-100, DTD, Oxford Flowers, and Food-101. The setup involves using the pretrained model as the initial checkpoint and attaching a task specific head to the backbone model. The entire network is then finetuned for the downstream task. These datasets vary widely in content and texture compared to ImageNet images. Further, the benchmark datasets include a significant diversity in the number of training images (2K--50K) and the number of classes (10--196). For a fair comparison, we only include results for models which are trained for a comparable number of epochs and batch sizes. For few-shot experiments, we follow the setup described in~\cite{Ericsson2021HowTransfer} for few-shot learning on the Cross-Domain Few-Shot Learning (CD-FSL) benchmark~\cite{guo2020broader}. We evaluate on Crop-Diseases~\cite{mohanty2016using}, EuroSAT~\cite{helber2019eurosat} datasets for 5-way 20-shot transfer. Finally, we evaluate performance of our model for the dense prediction task of surface normal estimation on NYUv2~\cite{silberman2012indoor} and report the median angular error. 

In Table~\ref{tab: transfer_multishot}, we provide results on the transfer learning experiments. We see that MMCL consistently outperforms the competing self-supervised learning approaches on a wide variety of transfer tasks and across all datasets. Further, MMCL also outperforms the supervised counterpart on several datasets. These results show that MMCL learns high-quality generalizable features.
\subsection{Experiments on Linear Evaluation}
For these experiments, we freeze the weights of the backbone (ResNet-50), and attach a linear layer as in~\cite{chen2020simple}, which is trained using the class labels available with the dataset. We train this linear layer for 100 epochs. Tables~\ref{tab: imagenet_100} and \ref{tab: imagenet_1k} show our results. We see that MMCL-pretrained model outperforms SimCLR by 6.3\% on ImageNet-1K using the same number of negatives. We also compare with the recent memory queue-based methods such as MoCo-v2~\cite{chen2020improved} and MoCHI~\cite{kalantidis2020hard}, demonstrating competitive performances while using far fewer negatives (510 vs 65536). We also establish a new state of the art on ImageNet-100 outperforming MoCHI by 1.7\% using only 510 negatives (0.008x) and without a memory bank.
\begin{table}[!htb]
    \centering
    \begin{tabular}{lcccr}
        \toprule
        Variant & Negative Source & Negatives & top-1 \\
        \toprule
        MoCo & Memory Queue & 16000 & 75.9 \\
        CMC & Memory Queue &16000 & 75.7 \\
        MoCo-v2  & Memory Queue & 16000 & 78.0 \\
        MoCHI & Memory Queue & 16000 & 79.0 \\
        \midrule
        Ours & Batch & 254 & \textbf{80.7} \\
        \bottomrule
        \end{tabular}
        \caption{ImageNet-100 linear evaluation.
        \label{tab: imagenet_100}
        }
\end{table}

\begin{table}[!htb]
    \centering
    \begin{tabular}{lcccr}
        \toprule
        Variant & Negative Source & Negatives & top-1 \\
        \toprule
        SimCLR  & Batch & 254 & 57.5 \\
        SimCLR  & Batch & 510 & 60.62 \\
        \midrule
        MoCo-v2 & Memory Queue & 65536 & 63.6 \\
        MoCHI & Memory Queue & 65536 & \textbf{63.9} \\
        \midrule
        Ours & Batch & 510 & \textbf{63.8} \\
        \bottomrule
        \end{tabular}
        \caption{ImageNet-1K linear evaluation. 
        \label{tab: imagenet_1k}}
\end{table}

\begin{table*}[h]
\label{sample-table}
\vskip 0.15in
\begin{center}
\begin{tabular}{lccccr}
\toprule
Method  & DD & MUTAG  & REDDIT-BIN & REDDIT-M5K & IMDB-BIN \\
\midrule
GraphCL & $\mathbf{78.62 \pm 0.40}$ & $86.80 \pm 1.34$ & $89.53 \pm 0.84$ & $55.99 \pm 0.28$ & $71.14 \pm 0.44$ \\
\midrule
Ours & $\mathbf{78.74 \pm 0.30}$ & $\mathbf{88.42 \pm 1.33}$ & $\mathbf{90.41 \pm 0.60}$ &  $\mathbf{56.18 \pm 0.29}$ & $\mathbf{71.62 \pm 0.28}$ \\
\bottomrule
\end{tabular}
\end{center}
\caption{Comparison with GraphCL. We compare graph representation learning on five graph benchmark datasets. The compared numbers are obtained from the original paper~\cite{you2020graph}.}
\label{tab: graph}
\vskip -0.1in
\end{table*}

\begin{figure*}[!h]
    \centering
    \begin{minipage}{0.3\textwidth}
        \centering
        \includegraphics[width=0.95\textwidth]{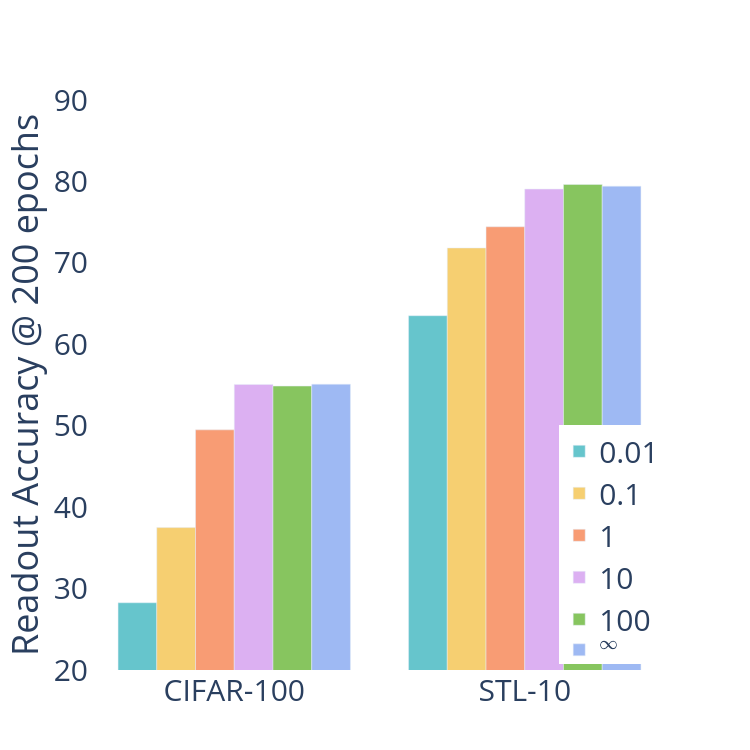} % first figure itself
        \caption{Effect of slack penalty $C$.}
        \label{fig: main_sigma_ablation}
    \end{minipage}
    \begin{minipage}{0.3\textwidth}
        \centering
        \includegraphics[width=0.95\textwidth]{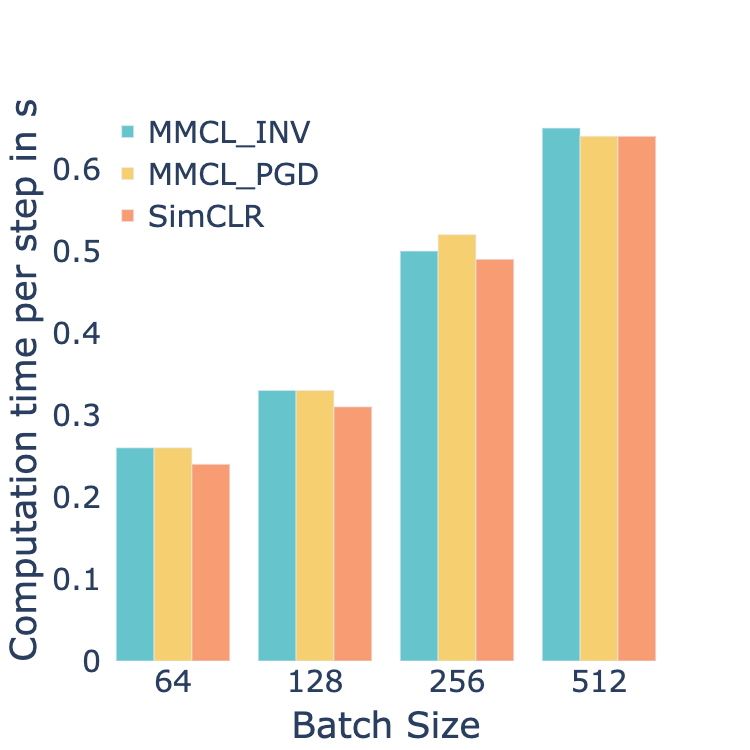} % second figure itself
        \caption{Computations (ImageNet).}
        \label{fig: compute_time}
    \end{minipage}  
    \begin{minipage}{0.3\textwidth}
        \centering
        \includegraphics[width=0.95\textwidth]{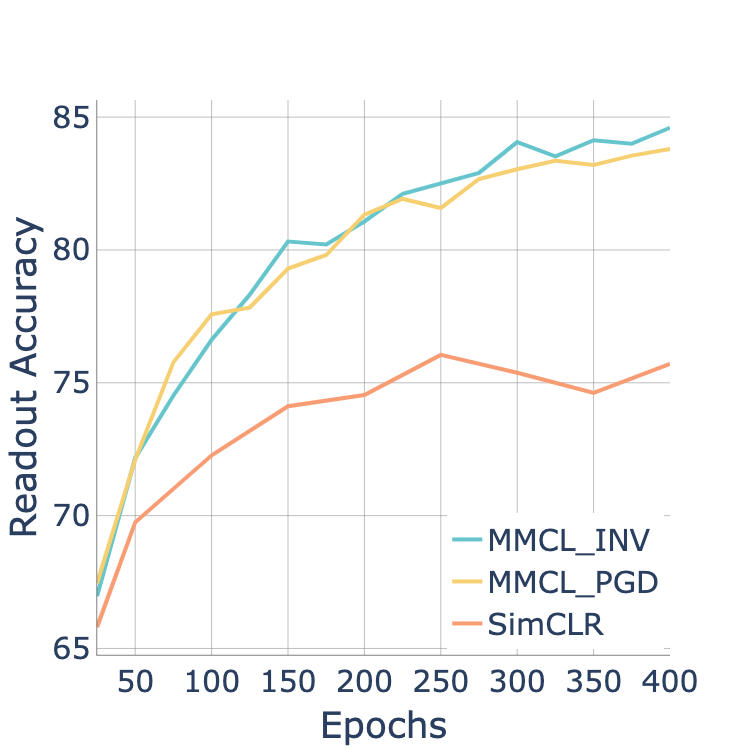} 
    \caption{Convergence (STL-10).}
    \label{fig:convergence}
    \end{minipage}
\end{figure*}

\subsection{Experiments on Graph Representation Learning}
Recall that our MMCL formulation works by modifying the contrastive learning loss function; and as a result, our approach is generically applicable to a variety of tasks. In this section, we evaluate our approach on learning graph representations using contrastive learning. We experiment with five common graph benchmark datasets MUTAG \cite{Kriege2012SubgraphMK} -- a dataset containing mutagenic compounds, DD\cite{yanardag2015deep}-- a dataset of biochemical molecules, REDDIT-BINARY, REDDIT-M5K, and IMDB-BINARY \cite{yanardag2015deep} which are social network datasets. Our experiments use GraphCL~\cite{you2020graph} --  a projection head based on the contrastive learning framework derived from SimCLR while incorporating graph augmentations.
For these experiments, we follow the training and evaluation protocols described in \cite{you2020graph}. Specifically, we use the standard ten-fold cross validation using an SVM and report the average performances and their standard deviations. We use the Adam optimizer for training these models. Table~\ref{tab: graph} shows the results of using MMCL instead of the NCE loss. We see that adding MMCL is comparable or better than GraphCL for these datasets. On MUTAG, we obtain an absolute improvement of 1.62\% over GraphCL. These results demonstrate the effectiveness of our approach in learning better representations. Given that the only change from GraphCL is the underlying objective, the results also show that our approach is general and can easily replace NCE based losses.
\begin{table}[!htb]
    \centering
    \begin{tabular}{lccccr}
        \toprule
        Loss Variant & Modality & Negatives & top-1 & R@1\\
        \toprule
        MoCo-v2 & RGB & 2048 & 46.8 & 33.1\\
        Ours & RGB & 254 & \textbf{52.45} & \textbf{45.6} \\
        \midrule
        MoCo-v2 & Flow & 2048 & 66.8 & 45.2\\
        Ours& Flow & 254 & \textbf{68.01} & \textbf{50.94} \\
        \bottomrule
        \end{tabular}
        \caption{Video self-supervised learning on UCF-101 dataset.}
    \label{tab: videossl}
\end{table}
\subsection{Experiments on Video Action Recognition}
For this experiment, we use the S3D backbone model~\cite{Xie2018RethinkingSF} pre-trained using MMCL on RGB and optical flow images from the UCF-101 dataset. We pre-train the network for 300 epochs, followed by 100 epochs for linear evaluation on the task of action recognition. We report the standard 10-crop test accuracy on split-1, as well as on nearest neighbor retrieval. As seen in Table~\ref{tab: videossl}, MMCL outperforms the baseline by 5.65\% on RGB and 1.21\% on flow in linear evaluation and 12.5\% and 5.74\% on Retrieval@1, demonstrating the generalizability of our approach to the video domain. 
\subsection{Ablation studies and Analyses}
For some of the ablation experiments, we use the smaller datasets: STL-10 and CIFAR-100, and report the readout accuracy calculated using k-NN with k=200 at 200 epochs, besides standard evaluations. 
\\
\noindent\textbf{Choice of Kernels:} Unlike the traditional NCE objective, our approach naturally allows for the use of kernels to better capture the similarity between the data points. In Table~\ref{tab: kernels}, we compare the readout accuracy on CIFAR100 and STL10 for various choices of popular kernels.  As is clear from the table, the RBF kernel performs better on both datasets. The best kernel hyperparameters $\sigma,\gamma$ were found empirically. We choose the RBF kernel in our subsequent experiments.

\begin{table}[]
    \centering
    \begin{tabular}{lrc}
        \toprule
        Kernel ($K(x,y)$) & CIFAR100 & STL10\\
        \midrule
        Linear: $x^Ty$ & 41.43\% & 74.82\%\\
        Tanh: $\tanh(-\gamma x^Ty+\eta)$ & 54.53\% & 80.5\% \\
        RBF: $\exp(-\frac{\|x-y\|^2}{2\sigma^2})$ & \textbf{55.35} \% & \textbf{81.33}\% \\
        \bottomrule
        \end{tabular}
   \caption{Effect of kernel choice.}
    \label{tab: kernels}
\end{table}

\noindent\textbf{Effect of slack:} A key benefit of our MMCL formulation is the possibility to use a slack that could potentially control the impact of false or hard negatives. To evaluate this effect,  we changed the slack penalty $C$ from 0.01 (i.e., low penalty for misclassification) to $C=\infty$. The results on readout accuracy in Figure~\ref{fig: main_sigma_ablation} shows that $C$ plays a key role in achieving good performance. For example, with $C=0.01$, it appears that the performance is consistently low for both the datasets, perhaps because the hard negatives are under-weighted. We also find that using a large $C$ may not be beneficial always.
\\
\noindent\textbf{Effect of batch size:}
We use STL-10 dataset for this experiment and train all models for 400 epochs. We report the Linear evaluation results for this experiment. From Table~\ref{tab: small_datasets_sota}, we see that our model consistently outperforms the SimCLR baseline, while performing much better than other approaches. Indeed, we find that MMCL is about 1-3\% better than HCL, which reweights the hard negatives. We also find that MMCL using a batch size of only 128 reaches close to the performance of HCL~\cite{robinson2021contrastive} trained using a batch size of 256, suggesting that the proposed selection of hard negatives via support vectors is beneficial. 
\\
\begin{table}[]
    \centering
    \begin{tabular}{lccr}
            \toprule
            STL-10  & 64 & 128 & 256  \\
            \midrule
            SimCLR~\cite{chen2020simple}  & 74.21 & 77.18 &  80.15  \\
           DCL~\cite{chuang2020debiased}  & 76.72 & 81.09 & 84.26     \\
          HCL~\cite{robinson2021contrastive}  & 80.39 & 83.98 & 87.44   \\
            \midrule
            Ours  & 80.11 & \textbf{86.8} & \textbf{88.3} \\
            \bottomrule
            \end{tabular}
    \caption{Accuracy (in \%) against the batch size on STL-10.}
    \label{tab: small_datasets_sota}
\end{table}
\noindent\textbf{Computational time against batch size:}
In Figure~\ref{fig: compute_time}, we show the time taken per iteration of MMCL variants against those of prior methods, such as SimCLR, for an increasing batch size. The computational cost of our inner optimization for finding the support vectors is directly related to the batch size. These experiments are done on ImageNet-1K with each RTX3090 GPU holding 64 images. We see that the time taken by MMCL is comparable to SimCLR.
\\
\noindent\textbf{Performances between MMCL Variants:}
In Table~\ref{tab: mmcl_variants_compare}, we compare performances between MMCL variants: PGD and INV. We see that both variants outperform the SimCLR, while the two MMCL variants show similar performance. In Figure~\ref{fig:convergence}, we plot the convergence curves (readout accuracy) against training epochs. The plots clearly show that our variants converge to superior performances rapidly than the baseline.
\\
\noindent\textbf{Visualization of Support Vectors:}
Next, we qualitatively analyze if the support vectors found by MMCL are semantically meaningful. To this end, we use an MMCL model pre-trained on STL-10 dataset. We use a batch of examples as input to the model, and choose one of the examples from the batch as a positive and the remaining as negative. We then solve the MMCL objective to find $\alpha$, where $\alpha=0$ corresponds to non-support vectors, $\alpha=C$ are the misclassified points, and $\alpha\in[0,C)$ are the support vectors. In Figure~\ref{fig : sv_vis}, we show the positive point, and a set of samples from the batch and the respective $\alpha$ values. The figure clearly shows that object instances from a similar class gets a high $\alpha$,
%suggesting that they are closer to the positive point in the latent space,
suggesting that they lie on or inside the margin and contribute to the loss 
while batch samples that are irrelevant or easy negatives
are not support vectors and do not contribute to the loss. For example, in Figure~\ref{fig : sv_vis}, the \emph{yellow bird} is an easy negative for a \emph{white truck} query image and our approach does not include that \emph{bird} in the support set.

\begin{figure*}[ht]
    \centering
    \includegraphics[trim=0 96 0 0,clip,width=0.8\textwidth]{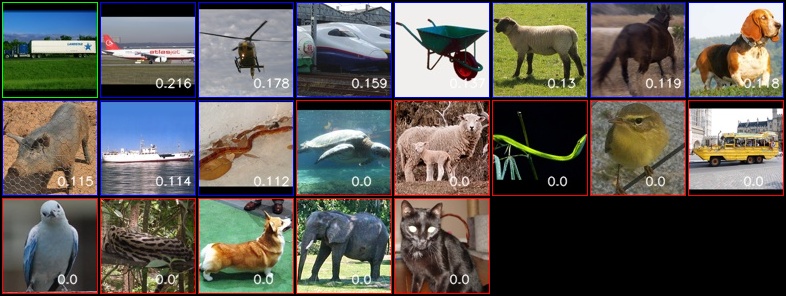}\\% first figure itself=
     \vspace{0.2cm}
    \includegraphics[trim=0 96 0 0,clip,width=0.8\textwidth]{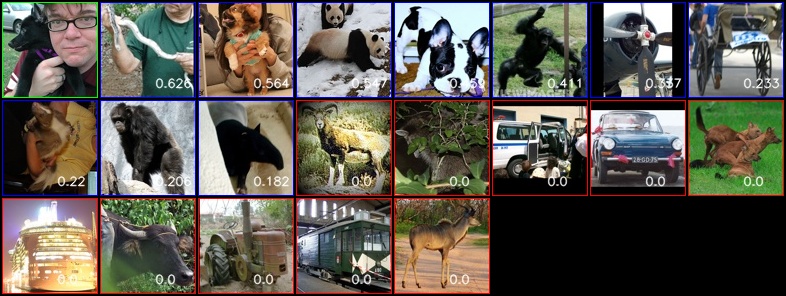}\\ % first figure 
    \vspace{0.2cm}
    \includegraphics[trim=0 96 0 0,clip,width=0.8\textwidth]{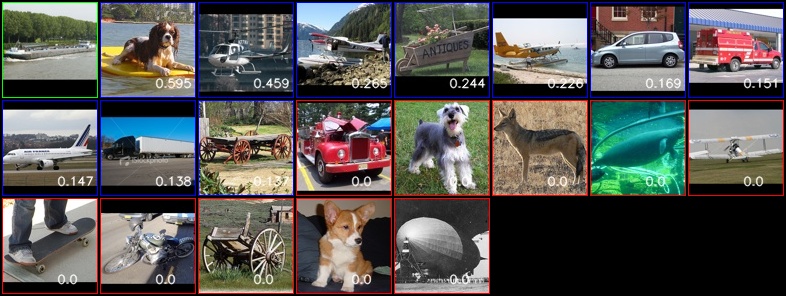}\\ % second figure 
    \vspace{0.2cm}
    \caption{Visualizing Support Vectors : We visualize a query image (green box), corresponding support vectors (blue boxes) and non-support vectors (red boxes). We see that the support vectors are plausible hard negatives while in most cases the non-support vectors are easy negatives. The $\alpha$ corresponding to the various negatives is shown at the bottom left of each image.}
    \label{fig : sv_vis}
\end{figure*}

\textbf{Longer Training:}
 Our focus in the above experiments has been in improving convergence and negative utilization with limited training. However, we see competitive performances on longer training as well. Using a batch size of 256 (510 negatives), our model reaches 66.5\% in 200 and 69.9\% in 400 epochs, compared to 62\% and 64.5\% respectively with same number of negatives in SimCLR\cite{chen2020simple}. Remarkably, our 100 epochs pre-trained models transfer better than PCL-v2’s~\cite{li2020prototypical} 200 epoch models on most transfer learning tasks (see Table~\ref{tab: transfer_multishot}).
\\

\begin{figure}[]
    \centering
    \begin{minipage}{0.33\textwidth}
        \resizebox{\linewidth}{!}{
        \begin{tabular}{lccr}
        \toprule
        Variant  & CIFAR-100 & STL-10 \\
        \midrule
        SimCLR & 66 & 80.15  \\
        \midrule
        $\PGDMMCL$ & \textbf{68.0} & 88.03  \\
            $\INVMMCL$ & \textbf{68.81} & \textbf{88.3} \\
        \bottomrule
        \end{tabular}
        }
        \captionof{table}{Performances on MMCL variants.}
        \label{tab: mmcl_variants_compare}
    \end{minipage}
    \vspace*{-0.5cm}
\end{figure}

\vspace*{-0.5cm}

\section{Conclusions}
In this paper, we proposed a new contrastive learning framework, dubbed Max-Margin Contrastive Learning, using which we learn powerful deep representations for self-supervised learning by maximizing the decision margin separating data pseudo-labeled as positives and negatives. Our approach draws motivations from the classical support vector machines via modeling the selection of useful negatives through support vectors. We obtain consistent improvements over baselines on a variety of downstream tasks.

\section{Acknowledgements}
AS thanks Ketul Shah, Aniket Roy, Shlok Mishra, and Susmija Reddy for their feedback. AS and RC are supported by an ONR MURI grant N00014-20-1-2787. SS acknowledges support from NSF-TRIPODS+X:RES  (1839258) and from NSF-BIGDATA (1741341).

\section{Appendix}
% In this Appendix, we provide additional empirical studies, analyses, insights into our formulation, mathematical proofs and additional details. We summarily list below the key sections of this Appendix. 
% \begin{enumerate}
%     \item Additional Results : transfer learning on 5 additional datasets. 
%     \item tSNE plots {\color{red} Should we remove tSNE plots from the appendix too?}
%     \item Additional analyses and ablation experiments 
%     \item Alternative MMCL variants and other theoretical results
%     \item Implementation Details
%     \item Limitations and potential negative impact
% \end{enumerate}

\section{Additional Transfer Learning Experiments}
In this section, we report results from several additional experiments we conducted analyzing the generalizability of MMCL-learned representations. Specifically, we used SUN397~\cite{xiao2010sun}, Oxford-IIIT Pets~\cite{parkhi2012cats}, and VOC2007~\cite{everingham2010pascal} for many-shot classification, ISIC~\cite{tschandl2018ham10000,codella2019skin} and ChestX~\cite{wang2017chestx} for few-shot classification, and VOC 2007~\cite{everingham2010pascal} for object detection. We follow the standard benchmarking protocol in \cite{Ericsson2021HowTransfer} for evaluating all our results. We report the top-1 accuracy metric on Food,
CIFAR-10, CIFAR-100, SUN397, Stanford Cars, and DTD datasets. The
mAP accuracy is reported for Aircraft,
Pets, Caltech101, and Flowers and the 11-point
mAP metric is reported on Pascal VOC 2007.  For few-shot transfer experiments, we report
the average accuracy along with a 95\% confidence interval. We report AP for object detection on VOC and median angular error for surface normal estimation on NYUv2. Please refer to ~\cite{Ericsson2021HowTransfer} for additional details. For a fair comparison, we only compare to models which were pre-trained for a comparable number of epochs (upto 200) and batch-size of 256. To enable comparison with approaches like MoCo-v2 and MOCHI, we download their official pre-trained models (for a batch size of 256) and follow the same benchmarking process to report the numbers.\footnote{Note that comparing with the large batch size (as high as 4096) models and longer pre-trained models (as high as 1000 epochs) is not fair since they require as many as 16 high-performance GPUs \cite{caron2020unsupervised} and very long pretraining times. For comparison, on our compute setup, training with a 256 batch size model for 100 epochs takes upwards of 3.5 days.} All results are reported in ~\ref{tab: transfer_multishot_supple}. We see that our approach consistently outperforms the prior approaches on most tasks and datasets which show the high-quality nature of representations learned using MMCL. 

\begin{table*}[ht]
\resizebox{\linewidth}{!}{
\begin{tabular}{l|ccc|cc|c}
\toprule
& \multicolumn{3}{c}{Many-Shot classification} & \multicolumn{2}{c}{Few-Shot classification} & \multicolumn{1}{c}{Object Det.}   \\
 \textit{Method} &  Pets & SUN397 & VOC2007 & ISIC & ChestX & VOC (AP) \\
 \midrule
 Supervised   & 92.42 & 63.56 & 84.76  & 48.79 $\pm$ 0.53 & 29.26 $\pm$ 0.44 & 53.26  \\
 \midrule
InsDis~\cite{wu2018unsupervised}  & 76.22 & 51.84 & 71.90  & 52.19 $\pm$ 0.53 & 29.13 $\pm$ 0.44  &  48.82   \\
MoCo~\cite{he2020momentum}  & 76.96 & 53.35 & 74.61 & \textbf{53.79} $\pm$ \textbf{0.54} & \textbf{30.0} $\pm$ \textbf{0.43} &  50.51 \\
PCL-v1~\cite{PCL}  & 86.98 & 58.40 & 82.08 & 38.01 $\pm$ 0.44 & 25.54 $\pm$ 0.43 & \textbf{53.93} \\
PCL-v2~\cite{PCL} & 85.39 & 58.82 & 82.20 & 44.4 $\pm$ 0.52 & 28.28 $\pm$ 0.42 & \textbf{53.92} \\
MoCo-v2~\cite{chen2020improved}$^\dagger$  & 87.32 & 56.22 & 79.34 & 49.70 $\pm$ 0.51 & 29.48 $\pm$ 0.45 & 50.67  \\
MoCHI~\cite{kalantidis2020hard}$^\dagger$ & 87.4 & 57.74 & 79.73 & 49.0 $\pm$ 0.53 & 28.4 $\pm$ 0.4 & 52.61  \\
\midrule
Ours &  \textbf{87.81} & \textbf{62.78} & \textbf{82.83} & 48.79 $\pm$ 0.53 & 29.26 $\pm$ 0.44 & 50.73 \\
\bottomrule
\end{tabular}}
\caption{Transfer learning results. We transfer a ImageNet-pretrained model (using MMCL) on a range of downstream tasks and datasets. We compare with models pretrained using a similar batch size and epochs. Results on competing approaches are taken from~\cite{Ericsson2021HowTransfer}. $^\dagger$Models evaluated using publicly available checkpoints.}
\label{tab: transfer_multishot_supple}
\end{table*}

\section{Additional Ablation Studies}
In this section, we include some additional analyses and experiments to analyse the MMCL loss. 

\noindent\textbf{Effect of Slack Penalty $C$:} In the main paper, we explored the effects of the slack penalty $C$ on performance. In this section, we augment that study with an analysis of the behavior of our model towards a changing $C$. Specifically, note that when starting to train the model, the neural network weights are randomly initialized, and as a result, the generated contrastive learning features might be arbitrary that asking for a misclassification penalty might be unlikely to be useful. To empirically understand this conjencture, we train a model from scratch for 25 epochs using a slack $C=\infty$ (i.e., no misclassification is allowed) and then reduce $C$ to a fixed value for the next 75 epochs. In Table~\ref{tab: reducing_C}, we provide the result of this analysis on CIFAR-100 and STL-10 datasets, where the legend shows the fixed value of $C$ used. The table shows that there is some benefit of using this approach, for example, on STL-10 dataset, where we see that for $C$=10 shows a slight benefit over $C=\infty$ (i.e., not using the proposed approach). However, overall it looks like such a scheme does not  reduce the performance dramatically, unless we underpenalize the misclassifications (such as using C=0.1, for example).

\begin{table}[]
    \centering
    \begin{tabular}{lcr}
    \toprule
      & CIFAR-100 & STL-10 \\
    \midrule
    C=0.1  & 44.41 & 73.82  \\
    C=1  & 53.08 &  78.86  \\
    C=10  & 55.1 &   81.25  \\
    C=50  & 54.95 & 79.63   \\
    C=$\infty$  & 55.1  & 79.42  \\
    \bottomrule
    \end{tabular}
     \caption{Reducing C over epochs.}
    \label{tab: reducing_C}
\end{table}

\noindent\textbf{False Negative Correction:}
In the main paper, we argue that the use of slack allows to limit the effect that false negatives have on training by limiting the maximum contribution that they can make to the loss. An alternative idea would be to nullify all points that lie \emph{inside} the margin (i.e., points that are potentially close to positives, and thus are perhaps false negatives). This would amount to the operation \texttt{$\alpha$[$\alpha$ == C] = 0} after finding the optimal dual values $\alpha$. For this experiment, we use a single $C$ for the entire training (unlike the above experiment). The result of this experiment is shown in Figure~\ref{tab: C_FN_correction}. The table shows that FN correction has a significant impact on performance when using a small $C$, while insignificant for larger $C$. This is perhaps not unexpected given that using a small C will lead to a large number of misclassified points, which if removed from training, will lead to suboptimal contrastive learning, especially in the initial epochs of training. However, with larger values of $C$, it appears from the table that the impact of removing some points does not impact the performance much. This is perhaps because using the same value of $C$ for all the training epochs is perhaps sub-optimal, and would need a schedule for changing this parameter. We plan to explore this idea in the future.

\begin{table}[]
    \centering
    \begin{tabular}{lcr}
    \toprule
      & Readout Acc. \\
    \midrule
    C=1  & 61.12  \\
    C=2 & 69   \\
    C=10  & 80.05  \\
    C=50  & 79.91    \\
    No Correction  & 79.63   \\
    \bottomrule
    \end{tabular}
     \caption{Using false negative correction for STL-10.}
    \label{tab: C_FN_correction}
\end{table}

\noindent\textbf{Influence of RBF Kernel Bandwidth $\sigma$:}
In this experiment we vary the RBF Kernel bandwidth for CIFAR-100 and STL-10. From Figure~\ref{fig: main_gamma_ablation},  we see that the performance can be influenced by the choice of $\sigma$, however, using $\sigma=1$ performs reasonably well for these datasets. 

\begin{figure}
        \centering
        \includegraphics[width=0.7\linewidth]{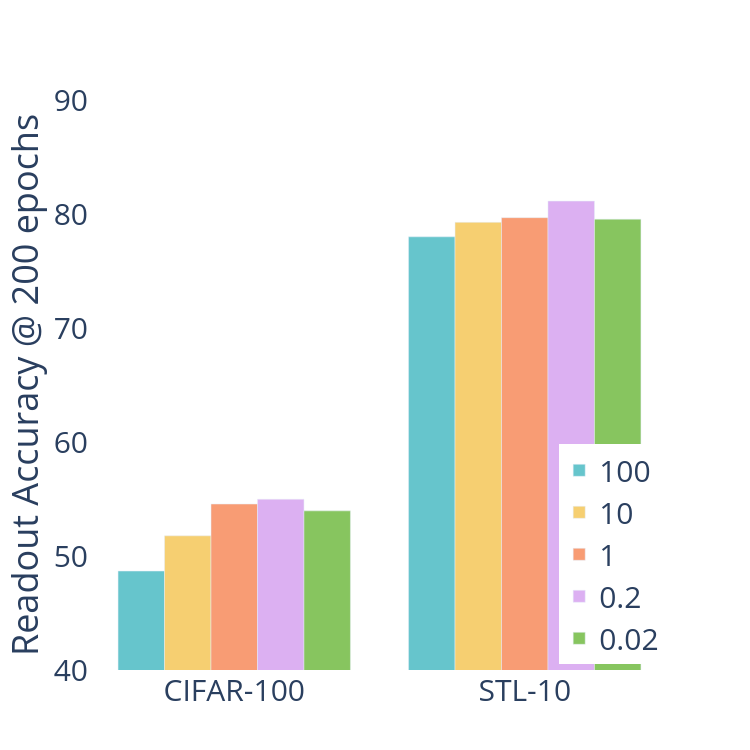} % second figure itself
        \caption{Effect of kernel bandwidth $1/\sigma^2$.}
        \label{fig: main_gamma_ablation}
\end{figure}

\noindent\textbf{Effect of Batch Size:}
In the main paper, we showed the effect of batch size on STL-10 (Table~\ref{tab: small_datasets_sota}). In Table~\ref{tab: batchsize_cifar100}, we augment that with a similar study for CIFAR-100. Our performances on CIFAR-100 are similar, albeit the improvements are not as pronounced as in STL-10, perhaps because its a smaller dataset. Further, the trend in the tables show that similar to other methods, the performance of MMCL increases consistently with increasing batch sizes, while showing signs of diminishing returns as the batch sizes grows larger (e.g., grows from 256 to 512). We still consistently outperform the SimCLR baseline. 

\begin{figure}
    \centering
    \includegraphics[width=0.7\linewidth]{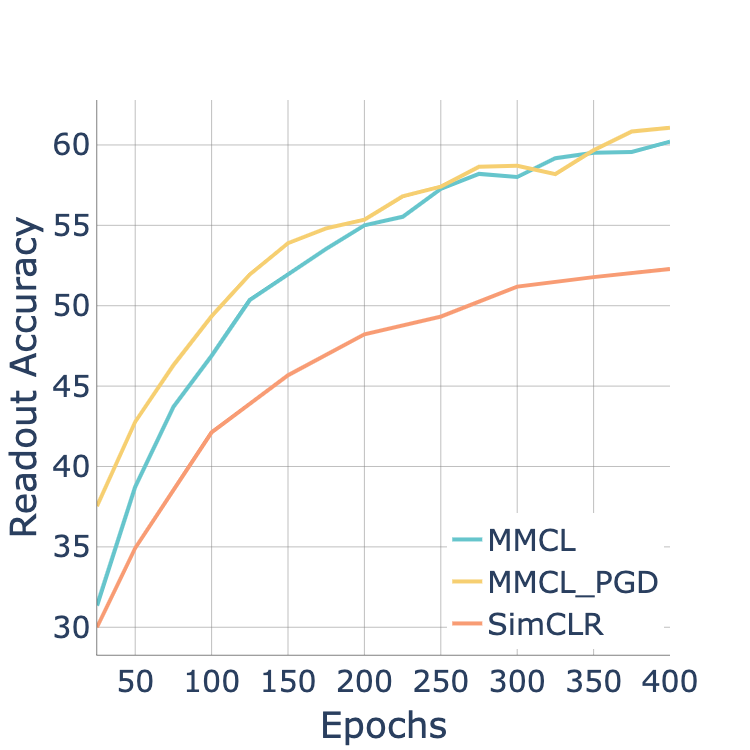}
    \caption{Convergence (CIFAR-100).}
    \label{fig:convergence_CIFAR100}
\end{figure}
\begin{table}[]
    \centering
\begin{tabular}{lcccr}
            \toprule
            CIFAR-100  & 64 & 128 & 256 & 512 \\
            \midrule
            SimCLR  & 60.8 & 65 & 66 & 66.32  \\
            DCL  & 63.2 & 66.2 &  67.5 & 67.71 \\
            HCL  & 66.46 & 68.37 &  \textbf{69} & \textbf{70.31}\\
            \midrule
            $\MMCL$  & 65.82  & \textbf{68.75} & \textbf{68.81} & \textbf{70.7}\\
            \bottomrule
            \end{tabular}
    \caption{Accuracy (in \%) against the batch size on CIFAR-100.}
    \label{tab: batchsize_cifar100}
\end{table}

\noindent\textbf{Convergence}
In Figure~\ref{fig:convergence} of the main paper, we show the convergence analysis of training on STL-10. In Figure~\ref{fig:convergence_CIFAR100} we add a similar analysis for STL-10. Similar to results on STL-10, we see that the MMCL variants converge to superior performances more rapidly than the baseline. 

\noindent\textbf{Kernels with SimCLR:}
The loss used in vanilla SimCLR (and subsequently other contrastive learning approaches) is technically equivalent to using an RBF kernel, and one could perhaps choose other similarity kernels, e.g., tanh. To quantitatively evaluate this intuition and the impact of such a choice, we used a tanh kernel in SimCLR, and experimented with 5 different bandwidths. The results (Table~\ref{tab: kernel_simclr}) suggest that while there can be some improvements to SimCLR via selecting other kernel similarities, our proposed formulation demonstrates significantly better performance. This is because our proposed kernel SVM formulation produces a classification margin in an RKHS where the support set is automatically sparse and weighted, which to the best of our knowledge is difficult to be argued for in the SimCLR setup.

\noindent\textbf{Using an Exact SVM Solver:}
Our initial approach was to use an exact SVM solver (using Qpth~\cite{amos2017optnet}) to estimate the max-margin hyperplane, however we found that our iterations were considerably slower (0.52 vs 3.07 for batch size 256).

\noindent\textbf{Intuition for Slack Penalty $C$:}
Note that in contrast to standard SVMs where the penalty $C$ is a trade off between the misclassification error and the margin, in our setup it has an even bigger role as the feature representation itself is evolving. Specifically, when using a larger $C$, the $\alpha$ weights on the misclassified points will be equal to $C$, and thus given our contrastive objective is using representations of data points linearly weighted by $\alpha$, the backbone neural network will be updated via gradients to minimize this loss -- thereby pushing these misclassified points out of the margin. Thus, in subsequent iterations, the focus of learning will be to increase the margin, as the number of misclassified points will be smaller. However, $C$ must also account for an estimate of false negatives, and thus a larger value of $C$ is perhaps not appropriate consistently. We found that an empirical evaluation of collisions is quite difficult, especially given that the network weights evolve over the epochs that collisions vanish as the contrastive loss drops. In Table~\ref{tab: C_FN_correction}, we attempted to define some heuristics to explicitly correct for false negatives (FN); our results show that when the penalty $C$ is small, correcting for the FNs do show minor benefits (e.g., C=50). However, for larger $C$, the network weights are perhaps updated very quickly to fix the FN misclassification error in the early epochs that FNs are absent in subsequent epochs. 

\begin{table}[]
    \centering
    \begin{tabular}{l|c}
    \toprule
        Method & Readout @ 200 \\
        \midrule
        SimCLR & 48.2 \\
        SimCLR tanh $\gamma=1$ & 50.5 \\
        SimCLR tanh $\gamma=0.5$ & 49.1 \\
        SimCLR tanh $\gamma=0.1$ & 49.9 \\
        SimCLR tanh $\gamma=2$ & 50 \\
        SimCLR tanh $\gamma=10$ & 49.9 \\
        MMCL & \textbf{55} \\
        \bottomrule
    \end{tabular}
    \caption{Use of Kernels with SimCLR. We see that while use of kernels does lead to an improvement for the SimCLR loss, our proposed formulation demonstrates significantly better performance }
    \label{tab: kernel_simclr}
\end{table}

\section{Other Details}
\noindent\textbf{PGD Solver:} We used nesterov accelerated PGD solver to get faster convergence. The value of $\alpha$ before each pre-training step is randomly initialized before the PGD iterations. We use a step size of 0.001 and optionally a step size of $\frac{1}{\|\Delta\|_2}$.

\noindent\textbf{ImageNet Experiments:} We use an initial learning rate of 1.2 and following SimCLR we use a warmup and an annealing scheme. Our best model uses $\sigma^2$ of 5. To keep hyperparameter search tractable, we obtain top-1 validation accuracy after pre-training for 10 epochs and finetuning for 1 epoch. Based on exeriments with ImageNet-100, we found the PGD variant to work slightly better than the INV variant (80.7\% vs 80.5\%) and so we use PGD variant for all ImageNet experiments. 

\noindent\textbf{CIFAR-100 and STL-10 Experiments:} We choose hyperparameters using the validation accuracy at 200 epochs based protocol mentioned in the main paper (Readout accuracy).

\noindent\textbf{UCF-101 Experiments} : We use a publicly available implementation for MoCo-v2 for video self supervised learning ~\cite{NEURIPS2020_3def184a}. All the hyperparameters are the same except $\sigma^2$ of 0.5, $C=1$, and learning rate of 1e-4. We use the PGD variant for experiments on UCF-101. 

\subsection{Limitations and Failure Cases}
The proposed approach has some minor overheads. While, we did not see any significant difference in training times (as shown in Figure~\ref{fig: compute_time}), they can be expensive atleast in the initial epochs of training. This is, for example, because, when the model is randomly initialized, the features produced are arbitrary and thus the PGD iterations in MMCL would need longer cycles to finish. However, we see that this trend is only for the initial few epochs and training speed improves over time. We could also perhaps use a different and better solver for box-constrained optimization that leads to much faster convergence (for example,~\cite{kiSrDh12} or OSQP~\cite{osqp-gpu})

\subsection{Potential Negative Impact}
Our paper makes a fundamental contribution to the field of self-supervised and contrastive learning. We do not perceive any obvious negative social impacts of this work. In fact, self-supervision naturally helps to avoid certain biases that can emerge from labeling biases. That said, training self-supervised learning methods on already biased datasets could be harmful and potentially lead to the biases being inherited by the models during the fine-tuning stage.

\bibliography{mmcl}
\end{document}